\documentclass[]{config/antgroup}
\usepackage{config/antgroup}
\newcommand*\justify{%
  \fontdimen2\font=0.4em% interword space
  \fontdimen3\font=0.2em% interword stretch
  \fontdimen4\font=0.1em% interword shrink
  \fontdimen7\font=0.1em% extra space
  \hyphenchar\font=`\-% allowing hyphenation
}

\renewcommand{\texttt}[1]{%
\begingroup
\ttfamily
\begingroup\lccode`~=`/\lowercase{\endgroup\def~}{/\discretionary{}{}{}}%
\begingroup\lccode`~=`[\lowercase{\endgroup\def~}{[\discretionary{}{}{}}%
\begingroup\lccode`~=`.\lowercase{\endgroup\def~}{.\discretionary{}{}{}}%
\catcode`/=\active\catcode`[=\active\catcode`.=\active
\justify\scantokens{#1\noexpand}%
\endgroup
}

\graphicspath{{doc/}}

\PassOptionsToPackage{numbers, compress}{natbib}

\usepackage[utf8]{inputenc}         %
\usepackage[T1]{fontenc}            %
\usepackage{graphicx}
\usepackage{todonotes}
\usepackage{multirow}
\usepackage{hyperref}
\usepackage[capitalize,noabbrev,nameinlink,sort]{cleveref}
\usepackage{subcaption}
\usepackage{multicol}
\usepackage{array}
\usepackage{enumitem}
\usepackage{algorithm}
\usepackage{algorithmic}
\usepackage{tabularx}
\usepackage{listings}
\usepackage{makecell}
\usepackage{xspace}
\usepackage{colortbl}
\usepackage{fontawesome5}
\usepackage{pifont}
\usepackage[normalem]{ulem}
\useunder{\uline}{\ul}{}
\usepackage{tablefootnote}
\usepackage{tcolorbox}
\usepackage{arydshln}
\usepackage[toc, page, header]{appendix}
\usepackage{tabularx}
\usepackage{listings}

\usepackage{amsmath}
\usepackage{amssymb}
\usepackage{amsfonts}                               % blackboard math symbols
\usepackage{amsthm}
\usepackage[mathcal]{eucal}
\usepackage{mathrsfs}
\usepackage{bm}                                     % bm command
\usepackage{blkarray}                               % to support matrix
\usepackage{nicefrac}                               % compact symbols for 1/2, etc.
\usepackage{bbm}

\newtheorem{theorem}{Theorem}[section]

\newtheorem{proposition}[theorem]{Proposition}

\usepackage{wrapfig}
\usepackage{graphicx}                               % include pdf figures
\usepackage{caption}
\usepackage{tikz}
\usepackage{circuitikz}
\usetikzlibrary{patterns,snakes}
\usetikzlibrary{positioning,calc,fit,decorations.pathmorphing,shapes.geometric, shapes.gates.logic.US, calc}
\usetikzlibrary{arrows,arrows.meta,decorations.markings,shapes,shapes.arrows}
\usetikzlibrary{decorations,decorations.pathreplacing}
\usetikzlibrary{backgrounds}
\usepackage{filecontents}                           % support to pgfplots
\usepackage{pgfplots}
\usepackage{pgfplotstable}
\usepgfplotslibrary{groupplots}
\usepackage{scalefnt}
\pgfplotsset{compat=newest}
\usepackage{xcolor}

\definecolor{firstcolor}{HTML}{C3423F}
\definecolor{secondcolor}{HTML}{2A4B8C}
\definecolor{aworld_blue}{HTML}{4e81ff}
\definecolor{aworld_cyan}{HTML}{41d7fa}
\definecolor{aworld_teal}{HTML}{5fede4}
\definecolor{coral}{RGB}{255,127,80}
\definecolor{darkgreen}{RGB}{0,100,0}
\definecolor{darkyellow}{RGB}{204,153,0}
\definecolor{salmon}{RGB}{250,128,114}
\definecolor{darkred}{RGB}{150,0,0}

\hypersetup{
  colorlinks=true,
  linkbordercolor=aworld_blue,
  citebordercolor=aworld_cyan,
  urlbordercolor=aworld_teal
}

\newcommand{\ours}{\textsc{CrEST}\xspace}

\definecolor{improvementblue}{RGB}{55,126,184}    % Blue (#377eb8)
\definecolor{degradationorange}{RGB}{230,85,13}   % Orange (#e6550d)
\newcommand{\improvement}[1]{\textcolor{improvementblue}{\textbf{#1}}}

\def\eqref#1{equation~\ref{#1}}
\def\1{\bm{1}}

\DeclareMathAlphabet{\mathsfit}{\encodingdefault}{\sfdefault}{m}{sl}
\SetMathAlphabet{\mathsfit}{bold}{\encodingdefault}{\sfdefault}{bx}{n}

\begin{document}
\title{Teach the Magnitude, Not the Direction: Verifier-Bounded Credit Assignment for Multi-Turn Multi-step LLM Agents}

% TODO: 请在此处填写作者信息（当前为占位，参考模板）
\author{
  Zechuan Wang$^{1,3,\ast}$, Siyuan Lu$^{1,2,3,4,\ast}$, Hongxuan Zhang$^{3,5}$,
  Linjian Mo$^{3}$, Chenyi Zhuang$^{3}$, Leilei Gan$^{1,\dagger}$
  % 添加更多作者...
}

% TODO: 根据需要配置脚注（当前为占位）
{\renewcommand\thefootnote{}
  \footnotetext{$^\ast$Equal contributions. Work was done during  Zechuan and Siyuan's internship in Ant Group.
  }
  \footnotetext{$^\dagger$Corresponding Authors.}
}

% TODO: 请在此处填写机构信息（当前为占位，参考模板）
\affiliation{$^1$Zhejiang University\quad}
\affiliation{$^2$Shanghai Innovation Institute\quad}
\affiliation{$^3$\raisebox{-0.2em}{\includegraphics[height=1.1em]{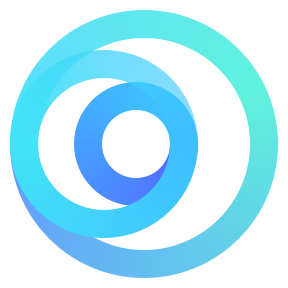}}AWorld Team, Inclusion AI\\}
\affiliation{$^4$Westlake University\quad}
\affiliation{$^5$Nanjing University}

% 添加更多机构...

\maketitle

% TODO: 可选 - 添加您的项目GitHub链接或其他链接
% \begin{center}
%   \vspace{-1.5em}
%   \href{https://github.com/}{\faGithub\ \texttt{your-project}}
%   \vspace{0.5em}
% \end{center}

%\clearpage
%\tableofcontents
%\newpage
%%%%%%%%%%%%%%%%%%%%%%%%%%%%%%%%%%%%%%%%%%%%%%%%%%%%%%%%%%%%%%%%%%
%%%%%%%%%%%%%%%%%%%%%%%%%%%%%%%%%%%%%%%%%%%%%%%%%%%%%%%%%%%%%%%%%%
%%%%%%%%%%%%%%%%%%%%%%%%%%%%%%%%%%%%%%%%%%%%%%%%%%%%%%%%%%%%%%%%%%
% main document

% 从第二页开始使用带页眉的样式
\begin{abstract}
  Reinforcement learning with verifiable rewards (RLVR) offers a verifier-bounded performance ceiling for training multi-turn tool-use agents, yet its trajectory-level credit assignment conflates heterogeneous per-turn outcomes into a single reward signal. On-policy distillation provides dense per-token supervision but is either teacher-bounded or prone to gradient concentration collapse. We introduce \ours, a hierarchical credit assignment framework that retains RL's verifier-bounded ceiling while incorporating dense token-level signals from a privileged self-teacher. \ours resolves credit at two levels: turn-segmented verified advantages address inter-turn dilution, while entropy-gated self-teacher modulation refines intra-turn token contributions. Experiments on BFCL V3 and WildToolBench show that \ours consistently outperforms both RL and distillation baselines across two model scales, with the largest gains on long-trajectory and strict session-level metrics. Our work demonstrates that the teacher's role in policy optimization can be reduced from determining update directions to modulating update magnitudes, unlocking dense credit assignment without sacrificing the verifier-bounded ceiling.
\end{abstract}

\section{Introduction}

Large language models~\citep{hurst2024gpt, comanici2025gemini, qwen2025qwen25, liu2024deepseek} are evolving from pure question-answering systems into autonomous agents capable of reasoning, acting, and interacting with diverse environments~\citep{merrill2026terminal, jimenez2024swe, xie2024osworld, liu2026gem, froger2025scaling, towers2026gymnasium}.
These scenarios share a common structure: agents must handle \textbf{multi-turn} sessions (sequences of user requests) where each turn requires \textbf{multi-step} execution (chains of actions and observations)~\citep{shridhar2021alfworld,wei2025browsecomp, xie2024osworld,openclaw2026}.

\begin{figure*}[!t]
    \centering
    \includegraphics[width=0.95\textwidth]{figures/overview.png}
    \caption{Comparison of credit assignment strategies on a 2-turn trajectory (Turn 1 succeeds, Turn 2 fails). \textbf{GRPO} broadcasts a single trajectory-level reward uniformly, erroneously reinforcing failed-turn tokens. \textbf{OPSD} provides dense token-level signal but concentrates gradient on low-entropy format tokens, with a teacher-bounded ceiling. \textbf{\ours{} (Ours)} combines turn-level advantages (inter-turn credit) with entropy-gated self-teacher modulation (intra-turn credit), correctly assigning negative advantage to Turn 2 while focusing gradient mass on high-uncertainty content tokens in Turn 1.}
    \label{fig:overview}
\end{figure*}

Reinforcement learning~\citep{schulman2017proximal,ahmadian2024back}, particularly RL with verifiable rewards (RLVR)~\citep{shao2024deepseekmath,qian2026toolrl}, is now the dominant post-training paradigm for tool-use agents~\citep{liu2025understanding,zhang2025nemotron,yu2026dapo}, offering a \emph{verifier-bounded} performance ceiling determined solely by reward quality. Yet standard objectives broadcast a single trajectory-level reward across all tokens.
Within a single turn, where all actions serve one query, this is merely noisy~\citep{lu2026dont}. In \textbf{multi-turn} sessions~\citep{li2023api, patil2025berkeley, yu2026benchmarking}, however, turns carry independent outcomes yet share one reward signal, making credit assignment ill-posed: on WildToolBench, no model exceeds 15\% session accuracy, and performance degrades sharply with turn count~\citep{laban2025llms}.

On-policy distillation (OPD)~\citep{lu2025onpolicy, li2026rethinkingopd, song2026survey} provides a natural credit assignment mechanism, where the teacher inherently scores each token in the student's trajectory and distinguishes which tokens are preferable from the teacher's perspective.
However, this requires a same-family, tokenizer-matched teacher~\citep{fu2026revisiting}, which is a rare constraint satisfied in practice.
On-policy self-distillation (OPSD)~\citep{zhao2026self, hubotter2026reinforcement, shenfeld2026self, penaloza2026privileged} removes this dependency by using the student itself as self-teacher, conditioned on privileged information such as ground truth or successful demonstrations from rollout group peers.
% Yet OPSD is brittle---without per-token KL clipping, it collapses within ${\sim}100$ steps due to gradient concentration~\citep{zhao2026self, brown2026sftrlopd}. 
% Moreover, OPSD restricts exploration~\citep{kim2026does} and its performance ceiling remains \emph{teacher-bounded}---the student cannot surpass the teacher's privileged distribution.
Yet OPSD is brittle and restricts exploration.
First, \citet{zhao2026self, brown2026sftrlopd} found that without per-token KL clipping, OPSD collapses within ${\sim}$100 training steps due to gradient concentration on a few tokens.
Second, OPSD's performance ceiling remains teacher-bounded~\citep{kim2026does}, which means that the student cannot surpass the teacher's privileged distribution.

This raises a central question: \emph{Can we design a training approach that maintains the verifier-bounded performance ceiling of RL while providing token-level signal to address the credit assignment challenges of multi-turn multi-step LLM agents training?}

% Concretely, training GRPO on multi-turn trajectories faces a \textbf{two-level credit assignment problem}: 1)~\emph{inter-turn}, where trajectory-level reward averaging dilutes per-turn outcomes so that tokens in failed turns can receive positive advantage; and 2)~\emph{intra-turn}, where all tokens within a turn share the same advantage, leaving pivot decisions indistinguishable from low-entropy format tokens.

% We propose \ours{} (Hierarchical \textbf{Cr}edit assignment via \textbf{E}ntropy-gated \textbf{S}elf-\textbf{T}eacher), which addresses both levels through a unified policy gradient with structured per-token advantages: turn-segmented verified rewards resolve inter-turn dilution, while an entropy-gated self-teacher refines intra-turn token contributions.
% The two levels are complementary---neither alone recovers the full performance---and verified reward determines all gradient directions, ensuring the performance ceiling remains verifier-bounded without requiring an external same-family teacher.

We propose \ours{} (Hierarchical \textbf{Cr}edit Assignment via \textbf{E}ntropy-Gated \textbf{S}elf-\textbf{T}eacher), which addresses both levels through a unified policy gradient with structured per-token advantages (\cref{fig:overview}).
At the turn level, \ours{} uses turn-segmented verifier rewards to prevent rewards from being diluted across long multi-turn trajectories.
At the token level, to prevent gradient concentration on a few tokens, \ours{} uses an entropy-gated self-teacher to modulate token-level contributions within each turn.
Additionally, \ours{} employs the verified reward to determine all gradient directions, ensuring the performance ceiling remains verifier-bounded without requiring an external same-family teacher.
The two level credit assignment are complementary---neither alone recovers the full performance.

Our main contributions are:

\begin{enumerate}[nosep, leftmargin=12pt]
    \item We identify and formalize the two-level credit assignment problem in multi-turn multi-step agentic RL, and provide a gradient-geometric analysis explaining why existing dense-signal methods fail in this setting (\cref{app:gradient_analysis}).
          \looseness=-1
    \item We propose \ours{}, a hierarchical credit assignment framework combining turn-segmented advantage with entropy-gated self-teacher token reweighting. \ours{} is verifier-bounded by construction and introduces only one hyperparameter ($\lambda$) beyond standard GRPO (\cref{sec:method}).
          \looseness=-1
    \item We demonstrate that \ours{} achieves 52.0\% average accuracy on BFCL V3 (Qwen3-4B) and 9.38\% session accuracy on WildToolBench (Qwen3-8B), consistently outperforming all baselines across both benchmarks and model scales (\cref{sec:experiments}).
\end{enumerate}
\section{Related Work}
\label{sec:related_work}

\paragraph{Tool use for LLM-based agents.}
LLM tool-use research spans a complexity spectrum, from reasoning before function-calling~\citep{zhang2025nemotron, qian2026toolrl} to tool-integrated reasoning (TIR) for fulfilling user's query~\citep{li2025torl, jin2025search, feng2025retool, zheng2025deepresearcher}. Most relevant to our work, multi-turn multi-step benchmarks~\citep{zhang2025turnbench, yu2026benchmarking, patil2025berkeley} evaluate agents across sessions of multiple queries, each requiring its own execution chain.
These settings expose two challenges absent in single-turn work: cross-turn state consistency and error propagation across turns. Empirically, agent performance degrades substantially with turn count; \citet{yu2026benchmarking} report session accuracy below 15\% across 57 LLMs, motivating training-time methods that respect the turn structure of multi-turn sessions.

\paragraph{Agentic Reinforcement Learning.}

RL-based post-training has become the standard approach for aligning LLM agents with task objectives~\citep{shao2024deepseekmath,yu2026dapo,wang2025ragen}.
Some methods focus on reward design: \citet{lu2026dont} decompose trajectory rewards into fine-grained per-step process scores combining state correctness and execution accuracy, while \citet{qian2026toolrl} and \citet{zhang2025nemotron} ground rewards directly in environment feedback from tool execution.
Despite these advances, most existing methods compute a single advantage per trajectory (or per-step within a single turn), leaving the inter-turn credit assignment problem unaddressed---when turns within the same session have heterogeneous outcomes, trajectory-level or step-level averaging still mislabels tokens in failed turns.
MT-GRPO~\citep{wei2025reinforcing} attempts per-agent-step advantage computation, yet still lacks intra-step token-level credit assignment, treating all tokens within each step identically.

\paragraph{On-policy distillation.}
On-policy distillation (OPD)~\citep{lu2025onpolicy} replaces RL's sparse reward with dense per-token reverse KL against a same-family teacher, providing token-level credit assignment throughout the trajectory~\citep{li2026rethinkingopd}.
OPD converges faster than RL at moderate compute budgets, but its ceiling is teacher-bounded and requires tokenizer-matched, recipe-matched teachers~\citep{brown2026sftrlopd}. When such teachers are unavailable, self-distillation methods use the student itself conditioned on privileged information like expert demonstrations~\citep{shenfeld2026sdft}, feedback~\citep{hubotter2026reinforcement} and ground-truth answers~\citep{zhao2026self}.
More recently, hybrid methods attempt to combine RL's verifier-bounded direction with distillation's dense signal: SDAR~\citep{lu2026sdar} gates self-distillation as an auxiliary loss alongside RL, and RLSD~\citep{yang2026rlsd} uses self-distillation for update magnitudes while relying on verifiable rewards for directions.
However, these hybrids either lack principled control over gradient concentration~\citep{zhao2026self,brown2026sftrlopd} or do not address the inter-turn credit assignment structure unique to multi-turn settings.

\section{Method}
\label{sec:method}

Multi-turn multi-step training faces a \emph{hierarchical} credit assignment problem: at the coarse level, trajectories mix successes and failures across turns; at the fine level, individual turns conflate high-value decisions with low-entropy formatting.
\ours{} addresses both levels through a unified policy gradient framework with structured per-token advantages (\cref{eq:objective}):
\begin{itemize}[nosep,leftmargin=12pt]
    \item \textbf{Inter-turn (coarse)}: Turn-segmented verified-reward advantages (\cref{sec:method:turn}) isolate per-turn outcomes, preventing dilution from trajectory-level averaging.
    \item \textbf{Intra-turn (fine)}: Entropy-gated self-teacher signals (\cref{sec:method:token}) refine per-token contributions, concentrating gradient mass on high-uncertainty content tokens.
\end{itemize}

The two levels are \emph{complementary}: without inter-turn segmentation, no amount of intra-turn refinement can rescue tokens in failed turns; without intra-turn reweighting, pivot decisions in successful turns remain indistinguishable from formatting tokens. Critically, verified reward determines all gradient \emph{directions} while the self-teacher only modulates \emph{magnitudes}, ensuring the performance ceiling stays verifier-bounded (\cref{app:gradient_analysis}).

\begin{figure*}[t]
    \centering
    \includegraphics[width=0.95\textwidth]{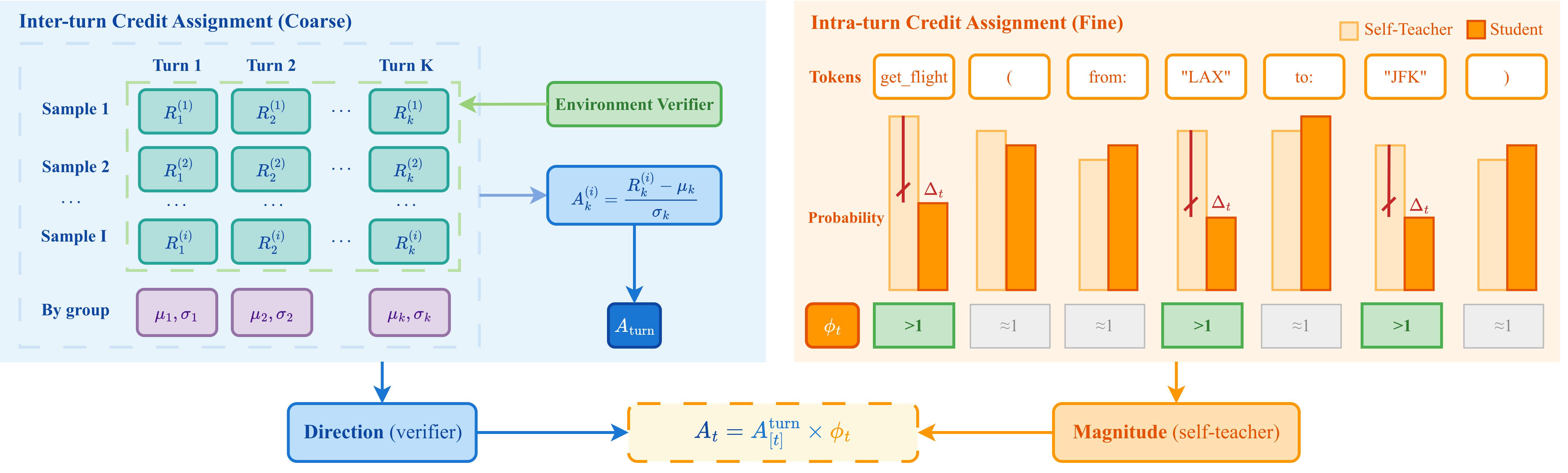}
    \caption{Overview of \ours. \textbf{Left (Inter-turn)}: A group of $G$ rollouts is scored by the environment verifier per turn; group-relative advantages $A_{\text{turn}}$ are computed independently for each turn, providing the gradient \emph{direction}. \textbf{Right (Intra-turn)}: Within each turn, a privileged self-teacher (conditioned on ground truth) provides per-token divergence $\Delta_t$; an entropy gate concentrates modulation on high-uncertainty content tokens (e.g., \texttt{get\_flight}, \texttt{"JFK"}) while suppressing format tokens, yielding the \emph{magnitude} factor $\phi_t$. \textbf{Bottom}: The final per-token advantage $\mathcal{A}_t = A_{[t]}^{\text{turn}} \times \phi_t$ combines verifier-bounded direction with teacher-modulated magnitude.}
    \label{fig:method}
\end{figure*}

\subsection{Unified Objective}
\label{sec:method:objective}

\ours{} is a standard on-policy policy gradient with a \emph{structured per-token advantage}. Given a prompt $x$ and an on-policy rollout $y \sim \pi_\theta(\cdot \mid x)$, the objective is:
\begin{equation}
    J(\theta) = \mathbb{E}_{\substack{x \sim \mathcal{D},\; y \sim \pi_\theta(\cdot|x)}} \left[\sum_t \mathcal{A}_t \cdot \log \pi_\theta(y_t \mid y_{<t}, x)\right]
    \label{eq:objective}
\end{equation}
where the per-token advantage $\mathcal{A}_t$ exposes the hierarchical structure:
\begin{equation}
    \mathcal{A}_t = \underbrace{A_{[t]}^{\mathrm{turn}}}_{\substack{\text{\textbf{inter-turn}:} \\ \text{\emph{which turn?}}}} \cdot\ \underbrace{\phi_t}_{\substack{\text{\textbf{intra-turn}:} \\ \text{\emph{which token?}}}}
    \label{eq:advantage_decomposition}
\end{equation}
$A_{[t]}^{\mathrm{turn}}$ is a turn-segmented verified-reward advantage (\cref{sec:method:turn}) that resolves \emph{which turn} contributed to success or failure. $\phi_t \in [1,\; 1{+}\lambda\epsilon]$ is a magnitude modulation factor (\cref{sec:method:token}) that refines \emph{which tokens within that turn} deserve stronger updates, derived from a privileged self-teacher gated by student uncertainty.

\begin{wraptable}{r}{0.5\columnwidth}
    \vspace{-0.8em}
    \centering
    \footnotesize
    \caption{Design space of hierarchical credit assignment. Methods differ in how they allocate credit across turns ($A_{[t]}^{\mathrm{turn}}$) and tokens ($\phi_t$). MT-GRPO computes per-turn advantages but propagates discounted future-outcome credit to earlier turns (\emph{cumulative}), whereas \ours{} computes each turn's advantage from that turn's reward only (\emph{independent}). OPD and OPSD share the same slot ($\phi_t \propto \Delta_t$ with no verified-reward base, hence teacher-bounded); they differ only in teacher source (external same-family vs.\ self with privileged context).}
    \label{tab:special_cases}
    \resizebox{\linewidth}{!}{
        \begin{tabular}{@{}lcc@{}}
            \toprule
            \textbf{Method}         & $A_{[t]}^{\mathrm{turn}}$ & $\phi_t$                                \\
            \midrule
            GRPO                    & trajectory-level          & $1$                                     \\
            MT-GRPO                 & per-turn (cumulative)     & $1$                                     \\
            OPD~/~OPSD              & ---                       & $\propto \Delta_t$                      \\
            \textbf{\ours{} (Ours)} & per-turn (independent)    & $1 + \lambda_t^{\mathrm{eff}}(w_t - 1)$ \\
            \bottomrule
        \end{tabular}
    }
    \vspace{-1.2em}
\end{wraptable}

The factorization in \cref{eq:advantage_decomposition} reveals why both levels are necessary: $A_{[t]}^{\mathrm{turn}}$ provides the correct \emph{sign} (reinforce or suppress) for each turn's tokens, but treats all tokens within a turn uniformly; $\phi_t$ refines the \emph{magnitude} of each token's contribution, but cannot change sign (proof in \cref{app:gradient_analysis}). Together, they ensure gradient direction is determined entirely by verified rewards (verifier-bounded), while token-level allocation benefits from dense teacher signals. \Cref{tab:special_cases} shows how existing methods occupy restricted subspaces of this design.

\subsection{Turn-Level Credit Assignment}
\label{sec:method:turn}

Instead of computing a single trajectory-level reward averaged across turns, we compute group-relative advantages \emph{within each turn independently} (\cref{fig:method}, left). For a trajectory containing turns $\{1, \dots, K\}$, each turn $k$ receives its own verified reward $R_k$ from the environment. Within a GRPO group of $G$ rollouts, the advantage for turn $k$ in rollout $i$ is:
\begin{equation}
    A_k^{(i)} = \frac{R_k^{(i)} - \mathrm{mean}(\{R_k^{(j)}\}_{j=1}^G)}{\mathrm{std}(\{R_k^{(j)}\}_{j=1}^G) + \epsilon}
    \label{eq:turn_advantage}
\end{equation}
All tokens within turn $k$ of rollout $i$ share the advantage $A_k^{(i)}$. This ensures that a failed turn receives negative advantage regardless of whether other turns in the same trajectory succeeded.

\subsection{Token-Level Credit: Self-Teacher Signal}
\label{sec:method:token}

The modulation factor $\phi_t$ uses a privileged self-teacher (the same model conditioned on ground-truth context) to refine per-token gradient magnitude within each turn (\cref{fig:method}, right):
\begin{equation}
    \phi_t = 1 + \lambda_t^{\mathrm{eff}} \cdot (w_t - 1)
    \label{eq:phi}
\end{equation}
When $\lambda_t^{\mathrm{eff}} = 0$, $\phi_t = 1$ (no teacher influence); when $\lambda_t^{\mathrm{eff}} > 0$ and $w_t > 1$, $\phi_t > 1$ (teacher amplifies the gradient). We now define the two components: the raw teacher signal ($\Delta_t \to w_t$) and the gating mechanisms that control it (\cref{sec:method:control}).

\paragraph{Teacher--student divergence.}
For each response token $t$, let $\pi_T(\cdot \mid h_t^T)$ denote the self-teacher---the same model conditioned on ground truth as privileged context, following~\citet{zhao2026self}---and $\pi_\theta(\cdot \mid h_t)$ the student:
\begin{equation}
    \Delta_t = \frac{\log \pi_T(y_t \mid h_t^T) - \log \pi_\theta(y_t \mid h_t)}{\tau}
    \label{eq:delta}
\end{equation}
where $\tau$ is a temperature that smooths the divergence signal.

\paragraph{Token weight.}
The raw teacher preference is converted into a bounded multiplicative weight:
\begin{equation}
    w_t = \mathrm{clip}\!\Big(\exp\!\big(\mathrm{sign}(A_{[t]}^{\mathrm{turn}}) \cdot \Delta_t\big),\; 1{-}\epsilon,\; 1{+}\epsilon\Big)
    \label{eq:weight}
\end{equation}
The $\mathrm{sign}(A_{[t]}^{\mathrm{turn}}) \cdot \Delta_t$ construction aligns teacher preference with the verifier's direction: for positive-advantage tokens, teacher agreement increases $w_t$; for negative-advantage tokens, teacher agreement with the ``this was bad'' signal also increases $w_t$. This guarantees \textbf{direction preservation} (P1): $\mathrm{sign}(\mathcal{A}_t) = \mathrm{sign}(A_{[t]}^{\mathrm{turn}})$ for all $t$, i.e., the gradient direction is determined entirely by verified reward (proof in \cref{app:gradient_analysis}).

\subsection{Stabilization via Selective Gating}
\label{sec:method:control}

The self-teacher provides dense per-token signals, but unconstrained use leads to concentration collapse~\citep{brown2026sftrlopd}: gradient mass flows to a small set of tokens where teacher--student divergence is largest, often low-entropy formatting rather than high-value decisions. We stabilize the intra-turn reweighting through a composed gating function $\lambda_t^{\mathrm{eff}}$ (\cref{eq:lambda_eff}) that controls \emph{where} (agreement with verifier) and \emph{how much} (student uncertainty) the teacher signal is applied.

\paragraph{Direction gate.}
\begin{equation}
    g_t^{\mathrm{dir}} = \mathbf{1}\!\left[\mathrm{sign}(A_{[t]}^{\mathrm{turn}}) \cdot \Delta_t > 0\right]
    \label{eq:direction_gate}
\end{equation}
When teacher and verifier disagree on the direction of update, $g_t^{\mathrm{dir}} = 0$ and the teacher signal is fully suppressed. This is the key mechanism ensuring the performance ceiling remains verifier-bounded rather than teacher-bounded.

\paragraph{Entropy gate.}
To prevent gradient concentration on low-entropy format tokens, we modulate the reweighting strength by the student's token-level uncertainty, using surprisal $u_t = -\log \pi_\theta(y_t \mid h_t)$ as a lightweight proxy:
\begin{align}
    g_t^{\mathrm{ent}} & = \sigma\!\left(\frac{u_t - \mathbb{E}[u]}{\mathrm{std}(u) + \epsilon}\right) \label{eq:entropy_gate} \\
    m_t^{\mathrm{ent}} & = 1 + \rho\,(2\,g_t^{\mathrm{ent}} - 1) \label{eq:entropy_mod}
\end{align}
where $\sigma$ is the sigmoid function and statistics are computed over response tokens. The Z-score normalization centers surprisal around the mean and scales by standard deviation, mapping to $[0,1]$ via sigmoid. For low-uncertainty format tokens, $m_t^{\mathrm{ent}} < 1$ reduces modulation strength; for high-uncertainty content tokens, $m_t^{\mathrm{ent}} > 1$ increases it. The range $[1{-}\rho,\; 1{+}\rho]$ redistributes gradient budget without zeroing out any token.

\paragraph{Composed gating.}
The direction and entropy gates compose into a single effective gating function:
\begin{equation}
    \lambda_t^{\mathrm{eff}} = \mathrm{clip}\!\left(\lambda \cdot \underbrace{g_t^{\mathrm{dir}} \cdot m_t^{\mathrm{ent}}}_{\text{safety \& focus}},\; 0,\; \lambda\right)
    \label{eq:lambda_eff}
\end{equation}
This yields $\phi_t = 1 + \lambda_t^{\mathrm{eff}}(w_t - 1) \in [1, 1{+}\lambda\epsilon]$, ensuring \textbf{three formal properties} (\cref{app:gradient_analysis}): (P1)~direction preservation ($\mathrm{sign}(\mathcal{A}_t) = \mathrm{sign}(A_{[t]}^{\mathrm{turn}})$ always), (P2)~bounded bias ($\|\mathrm{Bias}\| \leq \lambda\epsilon \cdot \|\nabla J_{\mathrm{GRPO}}\| \approx 8.4\%$ with $\lambda{=}0.3, \epsilon{=}0.28$), and (P3)~sign-consistent amplification ($\phi_t \geq 1$ when active).

$\lambda$ is the \textbf{only tunable hyperparameter} controlling teacher influence strength. The remaining values---$\tau{=}2.0$ (divergence temperature), $\epsilon{=}0.28$ (clip bound, inherited from OPSD stabilization), $\rho{=}0.5$ (entropy redistribution range)---are fixed constants requiring no tuning (ablation in \cref{sec:experiments}).

The complete training procedure, consolidating all equations into executable pseudocode, is provided in \cref{app:algorithm}.
\section{Experiments}
\label{sec:experiments}

\subsection{Experimental Setup}

\paragraph{Benchmarks.}
We conduct experiments on two multi-turn multi-step tool-use benchmarks: (1) \textbf{Berkeley Function-Calling Leaderboard (BFCL) V3}~\citep{patil2025berkeley}. Under our processed protocol, we use 100 fixed IDs from the multi-turn \textit{Base} split for training and 400 non-overlapping examples for evaluation: 100 examples each from \textit{Base}, \textit{Missing Functions}, \textit{Missing Parameters}, and \textit{Long-Context}. (2) \textbf{WildToolBench}~\citep{yu2026benchmarking} contains 256 multi-turn sessions designed to capture real-world interaction challenges, including compositional tasks, implicit intent through pronoun reference or ellipsis, and task switching between goal-directed queries and casual chat. The benchmark evaluates both action correctness and parameter accuracy. We allocate 128 sessions for training and reserve the remaining ones for evaluation.

The turn index is semantically aligned within each rollout group: all rollouts use the same fixed session and the same ordered list of user turns, so turn $k$ denotes the same benchmark request across the group. This assumption is specific to these benchmarks and does not directly extend to variable turn structures, terminal-only rewards, or strongly coupled turns without local verification. BFCL rewards compare post-execution environment states with annotated per-turn target executions, including result-coverage checks for read-only tools; WildToolBench rewards path-match calls against human-verified valid paths and score action correctness, parameter correctness, and path completion.

\paragraph{Base models.}
We evaluate on two model scales from the Qwen3 family~\citep{yang2025qwen3}: \textit{Qwen3-4B-Instruct},\footnote{\scriptsize\url{https://huggingface.co/Qwen/Qwen3-4B-Instruct-2507}} an instruction-tuned model, and \textit{Qwen3-8B}, a thinking model. Both have native tool-calling support and represent practical deployment sizes where post-training efficiency matters most.

\paragraph{Baselines.}
We compare against four training methods spanning RL-based and distillation-based paradigms:

\textbf{GRPO}~\citep{shao2024deepseekmath} applies standard Group Relative Policy Optimization with binary trajectory-level reward (1 if all turns succeed, 0 otherwise). Advantages are averaged across all tokens in the trajectory regardless of per-turn outcomes.

\textbf{MT-GRPO}~\citep{wei2025reinforcing} computes advantages at the per-agent-step granularity, providing finer-grained credit assignment than trajectory-level methods. However, within each step, all tokens still share the same advantage, lacking token-level differentiation.

\textbf{EnvTuning}~\citep{lu2026dont} uses fine-grained process rewards that combine per-turn state correctness and execution accuracy, aggregating per-turn scores into a trajectory-level reward via weighted averaging. Despite richer reward signals, it still computes a single trajectory-level advantage.

\textbf{OPD}~\citep{lu2025onpolicy} performs on-policy distillation with a same-family, same-variant teacher, providing dense per-token supervision via reverse KL. To satisfy the tokenizer-matched constraint, we pair each student with a teacher of the same model variant: Qwen3-4B-Instruct uses Qwen3-235B-A22B-Instruct\footnote{\scriptsize\url{https://huggingface.co/Qwen/Qwen3-235B-A22B-Instruct-2507}} as teacher, while Qwen3-8B uses Qwen3-32B. OPD's performance ceiling is teacher-bounded.

\textbf{OPSD}~\citep{zhao2026self} uses on-policy self-distillation where the student itself, conditioned on ground-truth context, serves as the privileged teacher. This removes the external teacher dependency but risks training instability from gradient concentration.

\paragraph{Implementation details.}
All methods are trained with the same on-policy rollout infrastructure, group size $G{=}16$, and learning rate schedule. For \ours, we set $\lambda{=}0.3$, $\epsilon{=}0.28$, $\tau{=}2.0$, and $\rho{=}0.5$; only $\lambda$ is tuned, while the other values are fixed design choices across all experiments. Each method uses a single training seed. We evaluate each trained policy with three decodes at temperature $10^{-6}$; these runs measure decoding consistency rather than training variance, and should not be interpreted as independent training replicates. Full training configuration and OPSD stabilization details are provided in \cref{app:implementation}.

\begin{table*}[!t]
    \centering
    \small
    \caption{Main results on BFCL V3 multi-turn and WildToolBench benchmarks. Best results per model in \textbf{bold}, second-best \underline{underlined}. Methods are grouped by training paradigm: RL-based (top), distillation-based (middle), and our method (bottom).}
    \label{tab:bfcl_v3_results}
    \resizebox{\textwidth}{!}{
        \begin{tabular}{l ccccc cc}
            \toprule
            \multirow{2}{*}{\textbf{Method}} & \multicolumn{5}{c}{\textbf{BFCL V3 Multi-Turn (\%)}} & \multicolumn{2}{c}{\textbf{WildToolBench (\%)}}                                                                                                                                                                                                    \\
            \cmidrule(lr){2-6} \cmidrule(lr){7-8}
                                             & \textbf{Average}                                     & \textit{Base}                                   & \textit{Miss Func}                     & \textit{Miss Param}                 & \textit{Long Context}               & \textbf{Task Acc.}                    & \textbf{Session Acc.}               \\
            \midrule
            \textit{Qwen3-4B-Instruct}       & 22.12                                                & 26.5                                            & 21.0                                   & 15.5                                & 25.5                                & 37.89                                 & 3.13                                \\
            \rowcolor{blue!6}
            \, + GRPO                        & 43.63                                                & 53.5                                            & 42.5                                   & 31.5                                & 47.0                                & 40.23                                 & 4.69                                \\
            \rowcolor{blue!6}
            \, + MT-GRPO                     & \underline{49.25}                                    & \underline{63.0}                                & 46.0                                   & 35.0                                & \underline{53.0}                    & 43.36                                 & \underline{6.25}                    \\
            \rowcolor{blue!6}
            \, + EnvTuning                   & 47.25                                                & 60.0                                            & \underline{47.0}                       & 32.0                                & 50.0                                & \underline{44.92}                     & 4.69                                \\
            \rowcolor{orange!6}
            \, + OPD                         & 44.50                                                & 52.0                                            & 43.0                                   & \underline{38.0}                    & 45.0                                & 42.58                                 & \underline{6.25}                    \\
            \rowcolor{orange!6}
            \, + OPSD                        & 38.75                                                & 46.0                                            & 41.0                                   & 26.0                                & 42.0                                & 38.88                                 & 4.69                                \\
            \rowcolor{cyan!8}
            \, + \textbf{\ours}              & \textbf{52.00} \improvement{(+29.88)}                & \textbf{67.0} \improvement{(+40.5)}             & \textbf{48.0} \improvement{(+27.0)}    & \textbf{38.0} \improvement{(+22.5)} & \textbf{60.0} \improvement{(+34.5)} & \textbf{48.44} \improvement{(+10.55)} & \textbf{7.03} \improvement{(+3.90)} \\
            \midrule
            \textit{Qwen3-8B}                & 33.38                                                & 41.5                                            & 38.5                                   & 27.0                                & 26.5                                & 43.75                                 & 4.69                                \\
            \rowcolor{blue!6}
            \, + GRPO                        & 43.25                                                & 53.0                                            & 46.0                                   & 36.0                                & 38.0                                & 45.43                                 & 5.47                                \\
            \rowcolor{blue!6}
            \, + MT-GRPO                     & 44.00                                                & \underline{57.0}                                & 45.0                                   & 36.0                                & 38.0                                & \underline{49.61}                     & 7.03                                \\
            \rowcolor{blue!6}
            \, + EnvTuning                   & \underline{46.00}                                    & 54.0                                            & 45.0                                   & \underline{40.0}                    & \underline{45.0}                    & 49.02                                 & \underline{7.81}                    \\
            \rowcolor{orange!6}
            \, + OPD                         & 44.75                                                & 50.0                                            & \textbf{52.0}                          & 39.0                                & 38.0                                & 45.25                                 & 3.91                                \\
            \rowcolor{orange!6}
            \, + OPSD                        & 41.75                                                & 48.0                                            & 43.0                                   & 39.0                                & 37.0                                & 43.95                                 & 3.91                                \\
            \rowcolor{cyan!8}
            \, + \textbf{\ours}              & \textbf{50.00} \improvement{(+16.62)}                & \textbf{60.0} \improvement{(+18.5)}             & \underline{51.0} \improvement{(+12.5)} & \textbf{42.0} \improvement{(+15.0)} & \textbf{47.0} \improvement{(+20.5)} & \textbf{52.34} \improvement{(+8.59)}  & \textbf{9.38} \improvement{(+4.69)} \\
            \bottomrule
        \end{tabular}
    }
    \vspace{2pt}
\end{table*}

\subsection{Main Results}

\Cref{tab:bfcl_v3_results} reports BFCL V3 and WildToolBench performance for both model scales across all baselines. \ours{} outperforms every baseline on average and achieves the best result on nearly all individual splits across both benchmarks and both model scales.

\paragraph{RL outperforms distillation}
Two patterns stand out in \cref{tab:bfcl_v3_results}. First, RL-based methods (GRPO, MT-GRPO, EnvTuning) systematically outperform distillation-based methods (OPD, OPSD) at both model scales. The clearest signal is OPSD: on Qwen3-4B BFCL Avg it scores $38.75\%$, falling \emph{below} GRPO ($43.63\%$) despite using a privileged ground-truth-conditioned teacher, giving direct empirical evidence for the teacher-bounded ceiling argued in \cref{sec:method}.
Second, within the RL family, finer-grained credit assignment helps: on Qwen3-4B BFCL Avg, MT-GRPO ($49.25\%$) and EnvTuning ($47.25\%$) both improve clearly over trajectory-level GRPO ($43.63\%$), yet every token within a step still shares a single advantage. The remaining headroom for dense per-token supervision, on top of verifier-determined directions, is exactly the room \ours is designed to fill.

\paragraph{\ours closes both gaps simultaneously.}
\ours reaches $52.00\%$ on Qwen3-4B-Instruct and $50.00\%$ on Qwen3-8B, surpassing the strongest RL baseline at each scale and achieving the best or tied-best result in 13 of 14 cells. OPD is best on Qwen3-8B \textit{Missing Functions}, while CrEST ties OPD on Qwen3-4B \textit{Missing Parameters}.
The margins are largest on the splits that stress hierarchical credit assignment: on BFCL \emph{Long Context} (the longest-trajectory split), \ours{} improves over the strongest baseline by $+7.0$ on 4B and $+2.0$ on 8B; on WildToolBench Session Accuracy (the strictest end-to-end multi-turn metric), by $+0.78$ on 4B and $+1.57$ on 8B.
These are exactly the regimes where credit dilution is worst, validating that turn-segmented advantages and entropy-gated modulation address complementary failure modes.

\subsection{Training Dynamics Analysis}

\begin{figure*}[!t]
    \centering
    \begin{subfigure}[t]{0.48\textwidth}
        \centering
        \includegraphics[width=\linewidth]{figures/grpo_vs_teacher.png}
        \caption{Training rollout accuracy.}
        \label{fig:training_curve}
    \end{subfigure}
    \hfill
    \begin{subfigure}[t]{0.48\textwidth}
        \centering
        \includegraphics[width=\linewidth]{figures/gradient_concentration.png}
        \caption{Top-$p$\% token gradient share.}
        \label{fig:gradient_concentration}
    \end{subfigure}
    \caption{Training dynamics on BFCL V3 (Qwen3-4B). (a)~\ours{} achieves both faster convergence and a higher ceiling than GRPO, surpassing the OPSD ceiling (dashed) by step 20, while OPSD plateaus and declines. (b)~Top-$p$\% token gradient share: OPSD concentrates $>$77\% of gradient on the top-5\% tokens, so its signal is dominated by a few tokens; \ours{}'s entropy gate redistributes gradient budget to high-uncertainty content tokens.}
    \label{fig:training_dynamics}
\end{figure*}

\Cref{fig:training_curve} compares training rollout accuracy on BFCL V3 (Qwen3-4B). \ours reaches $0.60$ accuracy within ${\sim}20$ steps and converges to ${\sim}0.70$, while GRPO rises slowly to ${\sim}0.57$ by step 160. OPSD plateaus at ${\sim}0.49$ and declines thereafter, empirically confirming the teacher-bounded ceiling discussed by \citet{lu2025onpolicy} and \citet{brown2026sftrlopd}. \ours surpasses this ceiling by step 20 and continues improving, validating that our direction gate successfully decouples gradient magnitude (teacher-informed) from gradient direction (verifier-determined).

\Cref{fig:gradient_concentration} provides a mechanistic explanation for the performance gap. We measure the fraction of total gradient magnitude captured by the top-$p$\% of tokens (ranked by per-token advantage magnitude) for each method. OPSD exhibits extreme concentration: the top-1\% of tokens account for ${\sim}42\%$ of the total gradient signal, and the top-5\% account for ${\sim}77\%$. This matches the gradient-geometric analysis of \citet{brown2026sftrlopd}: a handful of pivot tokens dominate the update, causing instability. GRPO, by contrast, is far more diffuse (top-10\% $\approx$ 31\%), but at the cost of treating all tokens equally. \ours{} occupies the desirable middle ground: more concentrated than GRPO (top-10\% $\approx$ 57\%), confirming that the self-teacher signal provides meaningful token differentiation, but far less extreme than OPSD, confirming that the entropy gate prevents concentration collapse.

\subsection{Ablation Studies}

\begin{wraptable}{r}{0.5\columnwidth}
    \vspace{-0.8em}
    \centering
    \small
    \caption{Hierarchical ablation on BFCL V3 (Qwen3-4B-Instruct). Neither inter-turn nor intra-turn credit assignment alone recovers the full \ours{} performance.}
    \label{tab:hierarchy_ablation_bfcl}
    \resizebox{\linewidth}{!}{
        \begin{tabular}{lccccc}
            \toprule
            \textbf{Method}            & \textit{Avg}   & \textit{Base} & \textit{M.Func} & \textit{M.Param} & \textit{L.Ctxt} \\
            \midrule
            GRPO (baseline)            & 43.63          & 53.5          & 42.5            & 31.5             & 47.0            \\
            \, + Inter-turn only       & 47.88          & 62.0          & 44.0            & 35.5             & 50.0            \\
            \, + Intra-turn only       & 48.75          & 61.0          & 50.0            & 32.0             & 52.0            \\
            \, + Both (\textbf{\ours}) & \textbf{52.00} & \textbf{67.0} & \textbf{48.0}   & \textbf{38.0}    & \textbf{60.0}   \\
            \bottomrule
        \end{tabular}
    }
    \vspace{-1.2em}
\end{wraptable}

\paragraph{Effect of hierarchical credit decomposition.}

\Cref{tab:hierarchy_ablation_bfcl} isolates the contribution of each credit assignment level.
The \textit{Intra-turn only} variant retains the trajectory-level GRPO group-relative advantage $A_i^{\mathrm{traj}}$ for every token and applies only the teacher modulation, $\mathcal{A}_t=A_i^{\mathrm{traj}}\phi_t$; it uses neither turn segmentation nor process rewards.
Inter-turn segmentation alone improves average accuracy over GRPO ($43.63 \to 47.88$), with the largest gain on \textit{Base} ($+8.5$) where cross-turn reward dilution is most severe.
Intra-turn modulation alone yields $+5.12$ points, with particular strength on \textit{Miss Func} ($+7.5$) where distinguishing content tokens from format tokens matters most. Combining both levels achieves $+8.37$ points ($43.63 \to 52.00$), with \textit{Long Context} showing the strongest synergy.
Neither component alone approaches the full model, confirming that the two levels address complementary failure modes: turn segmentation provides correct per-turn reward direction, while entropy gating allocates gradient budget to high-value tokens within each turn.

\begin{wraptable}{r}{0.5\columnwidth}
    \vspace{-0.8em}
    \centering
    \small
    \caption{Gating ablation on BFCL V3 (Qwen3-4B-Instruct). Removing the direction gate caps performance at the teacher-bounded ceiling; removing the entropy gate causes gradient concentration on format tokens; removing both leads to training instability.}
    \label{tab:gating_ablation}
    \resizebox{\linewidth}{!}{
        \begin{tabular}{@{}lccccc@{}}
            \toprule
            \textbf{Variant}         & \textit{Avg}   & \textit{Base} & \textit{M.Func} & \textit{M.Param} & \textit{L.Ctxt} \\
            \midrule
            \ours                    & \textbf{52.00} & \textbf{67.0} & \textbf{48.0}   & \textbf{38.0}    & \textbf{60.0}   \\
            \, +  w/o Direction gate & 46.75          & 60.0          & 47.0            & 28.0             & 52.0            \\
            \, +  w/o Entropy gate   & 46.25          & 59.0          & 50.0            & 27.0             & 49.0            \\
            \, +  w/o Both gates     & 43.50          & 53.0          & 43.0            & 27.0             & 51.0            \\
            \bottomrule
        \end{tabular}
    }
    \vspace{-1.2em}
\end{wraptable}

\paragraph{Both gates are necessary.}
\Cref{tab:gating_ablation} ablates the direction gate and entropy gate individually and jointly. Removing the direction gate drops average accuracy from $52.00\%$ to $46.75\%$ ($-5.25$), with \textit{Miss Param} suffering the largest decline ($38.0 \to 28.0$), consistent with the teacher overriding verifier signals on ambiguous parameter tokens. Removing the entropy gate yields a similar degradation ($52.00 \to 46.25$), with \textit{Long Context} dropping from $60.0$ to $49.0$, where long sequences amplify gradient concentration on format tokens. Removing both gates ($43.50\%$) falls to GRPO-level performance, confirming that uncontrolled self-distillation provides no benefit over standard RL. A sensitivity analysis of $\lambda$ is provided in \cref{app:lambda_sensitivity}.

\section{Conclusion}

We presented \ours, a hierarchical credit assignment framework for multi-turn multi-step agent training. By decomposing per-token advantages into turn-segmented verified rewards and entropy-gated self-teacher modulation, \ours resolves the two-level credit assignment problem that afflicts standard RL in multi-turn settings: inter-turn reward dilution and intra-turn token uniformity. Experiments on BFCL V3 and WildToolBench demonstrate that \ours consistently outperforms both RL and distillation baselines across two model scales, with ablations confirming that the two levels address complementary failure modes. Our central finding is that a self-teacher need not determine gradient directions to be useful---restricting it to magnitude modulation, gated by student uncertainty, unlocks dense credit assignment while preserving the verifier-bounded ceiling that makes RL effective.

Several directions for future work are worth exploring. The magnitude-only modulation principle demonstrated here could be applied to other structured generation tasks beyond tool use, such as multi-hop retrieval or collaborative dialogue, where hierarchical outcome structure is present. Additionally, the design space between our fixed self-teacher and fully online co-training remains largely unexplored; adaptive scheduling of teacher influence strength $\lambda$ across training stages may yield further gains. Finally, combining \ours with complementary advances in reward design (e.g., progress reward from~\citet{lu2026dont}) could compound the benefits of both finer reward signals and better credit allocation.

\section{Limitations}

Our evaluation is conducted on two multi-turn tool-use benchmarks at the 4B and 8B scale. While these represent distinct interaction paradigms (structured function-calling and naturalistic agentic sessions), broader validation across additional benchmarks and larger model scales would further strengthen the generality claims. Additionally, the entropy gate uses per-token surprisal as a proxy for token importance, which is effective for separating format tokens from content tokens in tool-use trajectories but remains a heuristic that may not optimally capture token informativeness in all generation contexts.

%%%%%%%%%%%%%%%%%%%%%%%%%%%%%%%%%%%%%%%%%%%%%%%%%%%%%%%%%%%%%%%%%%
%%%%%%%%%%%%%%%%%%%%%%%%%%%%%%%%%%%%%%%%%%%%%%%%%%%%%%%%%%%%%%%%%%
%%%%%%%%%%%%%%%%%%%%%%%%%%%%%%%%%%%%%%%%%%%%%%%%%%%%%%%%%%%%%%%%% 

% reference
\newpage
\bibliographystyle{config/antgroup}
\bibliography{doc/reference}

% appendix
\newpage
\appendix
\section{Gradient Analysis}
\label{app:gradient_analysis}

We prove the three properties stated in \S\ref{sec:method}. Recall the structured per-token advantage:
\begin{equation}
    \begin{aligned}
        \mathcal{A}_t & = A_{[t]}^{\mathrm{turn}} \cdot \phi_t,        \\
        \phi_t        & = 1 + \lambda_t^{\mathrm{eff}} \cdot (w_t - 1)
    \end{aligned}
\end{equation}
and the resulting policy gradient:
\begin{equation}
    \nabla_\theta J(\theta) = \mathbb{E}\!\left[\sum_t \mathcal{A}_t \nabla_\theta \log \pi_\theta(y_t)\right]
    \label{eq:app_policy_gradient}
\end{equation}

\subsection{(P1) Direction Preservation}

\begin{proposition}[Direction Preservation]
    For all tokens $t$, $\mathrm{sign}(\mathcal{A}_t) = \mathrm{sign}(A_{[t]}^{\mathrm{turn}})$.
\end{proposition}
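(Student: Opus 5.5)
Since $\mathcal{A}_t = A_{[t]}^{\mathrm{turn}}\cdot\phi_t$, it suffices to show $\phi_t > 0$ for every token $t$. Then the product has the same sign as $A_{[t]}^{\mathrm{turn}}$, including the degenerate case $A_{[t]}^{\mathrm{turn}}=0$, where both sides are $0$. So the plan is to lower-bound $\phi_t = 1 + \lambda_t^{\mathrm{eff}}(w_t-1)$ using only the clipping ranges in \cref{eq:weight} and \cref{eq:lambda_eff}. It does not depend on any property of the teacher.

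First I bound the two factors. By \cref{eq:lambda_eff}, $\lambda_t^{\mathrm{eff}}\in[0,\lambda]$. By \cref{eq:weight}, $w_t\in[1-\epsilon,1+\epsilon]$, so $w_t-1\ge-\epsilon$. This already gives $\phi_t \ge 1-\lambda\epsilon$, which is strictly positive whenever $\lambda\epsilon<1$ (e.g.\ $0.3\cdot 0.28=0.084$). That is a robust fallback that does not use the direction gate at all.

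Second, I use the direction gate to get the sharper statement $\phi_t\ge 1$ (which is also (P3)), splitting into two cases. If $g_t^{\mathrm{dir}}=0$, then $\lambda_t^{\mathrm{eff}}=0$ and $\phi_t=1$. If $g_t^{\mathrm{dir}}=1$, then $\mathrm{sign}(A_{[t]}^{\mathrm{turn}})\Delta_t>0$, so $\exp(\mathrm{sign}(A_{[t]}^{\mathrm{turn}})\Delta_t)>1$. After clipping this gives $w_t\in(1,1+\epsilon]$. Since $\lambda_t^{\mathrm{eff}}\ge 0$, it follows that $\phi_t\ge 1$. In both cases $1\le\phi_t\le 1+\lambda\epsilon$.

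The only delicate point is the zero-advantage edge case and the convention for $\mathrm{sign}(0)$. When $A_{[t]}^{\mathrm{turn}}=0$ the gate is $0$, so $\phi_t=1$ and $\mathcal{A}_t=0$, and the signs agree trivially. I would also note that the first bound covers any implementation where the gate is replaced by a soft version, provided $\lambda\epsilon<1$. Beyond that, the argument is a direct consequence of the clip ranges.
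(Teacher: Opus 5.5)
Your proof is correct and follows the paper's argument. You reduce to $\phi_t>0$ and split on the direction gate: $\phi_t=1$ when $g_t^{\mathrm{dir}}=0$, and $w_t\ge 1$, hence $\phi_t\ge 1$, when $g_t^{\mathrm{dir}}=1$. Your two additions are also valid and go beyond what the paper states: the explicit treatment of $A_{[t]}^{\mathrm{turn}}=0$, and the gate-free bound $\phi_t\ge 1-\lambda\epsilon>0$ whenever $\lambda\epsilon<1$, which shows that sign preservation already follows from the clip ranges alone without the direction gate.
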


\begin{proof}
    We show $\phi_t > 0$ always holds, so multiplication by $\phi_t$ preserves the sign of $A_{[t]}^{\mathrm{turn}}$.

    \textbf{Case 1:} $g_t^{\mathrm{dir}} = 0$ (direction gate off). Then $\lambda_t^{\mathrm{eff}} = 0$ by Eq.~\ref{eq:lambda_eff}, so $\phi_t = 1 > 0$.

    \textbf{Case 2:} $g_t^{\mathrm{dir}} = 1$ (direction gate on). This requires $\mathrm{sign}(A_{[t]}^{\mathrm{turn}}) \cdot \Delta_t > 0$ by Eq.~\ref{eq:direction_gate}, which implies $\exp(\mathrm{sign}(A_{[t]}^{\mathrm{turn}}) \cdot \Delta_t) > 1$. After clipping, $w_t \in [1, 1+\epsilon]$, so $w_t - 1 \geq 0$. Since $\lambda_t^{\mathrm{eff}} \geq 0$:
    \begin{equation}
        \phi_t = 1 + \underbrace{\lambda_t^{\mathrm{eff}}}_{\geq\, 0} \underbrace{(w_t - 1)}_{\geq\, 0} \geq 1 > 0
    \end{equation}

    In both cases $\phi_t > 0$, so $\mathrm{sign}(\mathcal{A}_t) = \mathrm{sign}(A_{[t]}^{\mathrm{turn}} \cdot \phi_t) = \mathrm{sign}(A_{[t]}^{\mathrm{turn}})$.
\end{proof}

\textbf{Interpretation.} The scalar coefficient of each token's policy-gradient contribution has the same sign as the inter-turn-only coefficient (i.e., \ours{} with $\phi_t{=}1$). The teacher signal modulates the \emph{magnitude} of each token's contribution but cannot flip its scalar sign. This is a local, token-wise anchoring property: reweighting can still change how token gradients cancel in the aggregate, so it does not imply that aggregate gradient directions, stationary points, or a global performance ceiling are determined solely by the verifier.

\subsection{(P2) Absolute Perturbation Bound}

\begin{proposition}[Absolute Perturbation Bound]
    For a sampled trajectory, let
    $g_t = A_{[t]}^{\mathrm{turn}}\nabla_\theta\log\pi_\theta(y_t\mid h_t)$ denote the per-token inter-turn gradient contribution. Let $\widehat{\nabla_\theta J}$ and $\widehat{\nabla_\theta J}_{\mathrm{TG}}$ denote the reweighted and inter-turn-only single-trajectory gradients, respectively. The teacher-induced perturbation satisfies:
    \begin{equation}
        \left\|\sum_t(\phi_t-1)g_t\right\|
        \leq \lambda\epsilon\sum_t\|g_t\|.
    \end{equation}
    The bound is absolute; it is not relative to $\left\|\sum_t g_t\right\|$.
\end{proposition}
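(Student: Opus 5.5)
The plan is to reduce the vector inequality to a scalar, token-wise bound on the modulation factor and then apply the triangle inequality. Concretely, I will first show that for every token $t$ one has $0 \le \phi_t - 1 \le \lambda\epsilon$. Once this holds, the claimed bound follows in one line:
\begin{equation*}
    \Big\|\sum_t(\phi_t-1)g_t\Big\| \le \sum_t |\phi_t-1|\,\|g_t\| \le \lambda\epsilon\sum_t\|g_t\|.
\end{equation*}
Here the first step is the triangle inequality together with homogeneity of the norm, and the second uses the scalar bound. The quantities $\widehat{\nabla_\theta J}$ and $\widehat{\nabla_\theta J}_{\mathrm{TG}}$ enter only through the identity
\begin{equation*}
    \widehat{\nabla_\theta J}-\widehat{\nabla_\theta J}_{\mathrm{TG}}=\sum_t(\phi_t-1)g_t.
\end{equation*}
This identity is immediate from $\mathcal{A}_t = A_{[t]}^{\mathrm{turn}}\phi_t$ (Eq.~\ref{eq:advantage_decomposition}) and linearity of the single-trajectory estimator.

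For the scalar bound I will reuse the case split from the proof of (P1).
\begin{itemize}[nosep,leftmargin=12pt]
    \item \textbf{Case $g_t^{\mathrm{dir}}=0$.} Eq.~\ref{eq:lambda_eff} gives $\lambda_t^{\mathrm{eff}}=0$, so $\phi_t-1=0$.
    \item \textbf{Case $g_t^{\mathrm{dir}}=1$.} The argument already given for (P1) shows $w_t\in[1,1+\epsilon]$, so $0\le w_t-1\le\epsilon$. The clip in Eq.~\ref{eq:lambda_eff} gives $0\le\lambda_t^{\mathrm{eff}}\le\lambda$, independent of the value of $m_t^{\mathrm{ent}}\in[1-\rho,1+\rho]$. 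Multiplying the two bounds gives $0\le \phi_t-1=\lambda_t^{\mathrm{eff}}(w_t-1)\le\lambda\epsilon$.
\end{itemize}

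The only point needing care is the edge case $A_{[t]}^{\mathrm{turn}}=0$, for instance when all group rewards for a turn coincide. There $\mathrm{sign}(0)=0$ forces $g_t^{\mathrm{dir}}=0$, so the first case applies. In addition $g_t=0$, so such tokens contribute nothing to either side and the bound holds trivially.

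I do not expect a genuine obstacle; the main subtlety is interpretive. The bound is stated against $\sum_t\|g_t\|$ rather than $\|\sum_t g_t\|$, and the two can differ arbitrarily when per-token contributions cancel. So I will add a remark that the earlier phrasing in (P2), with its relative bound $\lambda\epsilon\|\nabla J_{\mathrm{GRPO}}\|$, does not follow from this argument. Only the absolute form is proved here, which matches the caveat attached to (P1) about aggregate directions.
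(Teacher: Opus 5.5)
Your proposal is correct and matches the paper's proof essentially step for step: the difference identity $\widehat{\nabla_\theta J}-\widehat{\nabla_\theta J}_{\mathrm{TG}}=\sum_t(\phi_t-1)g_t$, then the case split on $g_t^{\mathrm{dir}}$ giving $|\phi_t-1|\le\lambda\epsilon$ for every token, then the triangle inequality. Your extra treatment of the $A_{[t]}^{\mathrm{turn}}=0$ edge case is accurate, as is your remark that the main-text relative bound against $\|\nabla J_{\mathrm{GRPO}}\|$ does not follow from this argument, which agrees with the paper's own interpretation of (P2).
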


\begin{proof}
    The difference between the reweighted and inter-turn-only gradients on one sampled trajectory is:
    \begin{equation}
        \widehat{\nabla_\theta J} - \widehat{\nabla_\theta J}_{\mathrm{TG}}
        = \sum_t(\phi_t-1)g_t.
    \end{equation}

    If $g_t^{\mathrm{dir}}=0$, then $\phi_t=1$. If $g_t^{\mathrm{dir}}=1$, then $\phi_t-1=\lambda_t^{\mathrm{eff}}(w_t-1)$, with $\lambda_t^{\mathrm{eff}}\leq\lambda$ and $w_t-1\leq\epsilon$ by Eqs.~\ref{eq:lambda_eff} and \ref{eq:weight}. Thus, for every token, $|\phi_t-1|\leq\lambda\epsilon$. Applying the triangle inequality gives:
    \begin{equation}
        \left\|\sum_t(\phi_t-1)g_t\right\|
        \leq \sum_t|\phi_t-1|\,\|g_t\|
        \leq \lambda\epsilon\sum_t\|g_t\|.
    \end{equation}
\end{proof}

\textbf{Interpretation.} The teacher-induced perturbation is controlled in absolute magnitude by the sum of per-token inter-turn gradient norms. With the default settings ($\lambda=0.3$, $\epsilon=0.28$), each token's advantage coefficient can be amplified by at most $8.4\%$; this is not a relative bound on the aggregate gradient, which may be small because token contributions cancel. Reducing $\lambda$ still makes the per-token modulation arbitrarily weak, smoothly approaching the inter-turn-only ablation.

\subsection{(P3) Sign-Consistent Amplification}

\begin{proposition}[Sign-Consistent Amplification]
    When $g_t^{\mathrm{dir}} = 1$, $\phi_t \geq 1$. That is, the teacher can only \emph{amplify} the gradient magnitude along the verifier-approved direction, never suppress it.
\end{proposition}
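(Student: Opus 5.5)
The plan is to read off the sign of each factor in $\phi_t - 1 = \lambda_t^{\mathrm{eff}}(w_t - 1)$ under the hypothesis $g_t^{\mathrm{dir}} = 1$. This mirrors Case 2 in the proof of Direction Preservation (P1), which already contains most of the argument. Here we state it as a standalone inequality and then add a remark on what ``amplify'' means.

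\textbf{Step 1 ($w_t \ge 1$).} By Eq.~\ref{eq:direction_gate}, $g_t^{\mathrm{dir}} = 1$ means $s_t := \mathrm{sign}(A_{[t]}^{\mathrm{turn}})\cdot\Delta_t > 0$, so $\exp(s_t) > 1$. Clipping to $[1-\epsilon, 1+\epsilon]$ is monotone and fixes the point $1$ (since $1 - \epsilon < 1 < 1 + \epsilon$ for $\epsilon \in (0,1)$). Hence Eq.~\ref{eq:weight} gives $w_t \in [1, 1+\epsilon]$, i.e.\ $w_t - 1 \ge 0$.

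\textbf{Step 2 ($\lambda_t^{\mathrm{eff}} \ge 0$).} By Eq.~\ref{eq:lambda_eff}, $\lambda_t^{\mathrm{eff}}$ is clipped into $[0,\lambda]$, so it is nonnegative regardless of the value of $m_t^{\mathrm{ent}}$. It is also worth checking that $m_t^{\mathrm{ent}} \in [1-\rho, 1+\rho] \subset [0.5, 1.5]$ is positive for $\rho = 0.5 < 1$. This means the clip at $0$ is never actually binding, but the argument does not rely on that.

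\textbf{Step 3 (combine and interpret).} The product of the two nonnegative factors gives $\phi_t = 1 + \lambda_t^{\mathrm{eff}}(w_t - 1) \ge 1$. Together with $|\mathcal{A}_t| = |A_{[t]}^{\mathrm{turn}}|\,\phi_t$ and (P1), the token's coefficient then has the verifier's sign and magnitude at least $|A_{[t]}^{\mathrm{turn}}|$. That is, the teacher can only amplify the update along the verifier-approved direction. Note also that when $g_t^{\mathrm{dir}} = 0$ we have $\phi_t = 1$ exactly, so the teacher never suppresses any token.

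The only delicate point is the edge handling: we need $\epsilon < 1$ so that the lower clip bound does not exceed $1$, and we need the clip in Eq.~\ref{eq:lambda_eff} to keep $\lambda_t^{\mathrm{eff}}$ nonnegative. Both follow from the fixed constants ($\epsilon = 0.28$, $\lambda = 0.3 > 0$), so I do not expect any step to pose a real obstacle.
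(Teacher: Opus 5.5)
Your proof is correct and matches the paper's argument, which simply points back to Case~2 of the (P1) proof: the gate forces $\exp(\mathrm{sign}(A_{[t]}^{\mathrm{turn}})\Delta_t)>1$, so $w_t\in[1,1+\epsilon]$, and the clip keeps $\lambda_t^{\mathrm{eff}}\ge 0$, so $\phi_t\ge 1$. One small correction to your closing remark: the lower clip bound $1-\epsilon$ stays at or below $1$ exactly when $\epsilon\ge 0$, which is all Step~1 needs, whereas $\epsilon<1$ only keeps $1-\epsilon$ positive and plays no role here.
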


\begin{proof}
    This was established in the proof of (P1), Case~2: $g_t^{\mathrm{dir}} = 1 \implies w_t \geq 1 \implies \phi_t = 1 + \lambda_t^{\mathrm{eff}}(w_t - 1) \geq 1$.
\end{proof}

\textbf{Interpretation.} Combined with (P1), this means: for every token, either the teacher has no effect ($\phi_t = 1$, when the direction gate is off) or the teacher \emph{increases} the magnitude of the update in the direction already approved by the verified reward ($\phi_t > 1$). The teacher signal is therefore a selective token-level amplifier. This does not imply that the aggregate parameter update or its stationary points are unchanged, because token gradients can cancel differently after reweighting.

\subsection{Discussion: Relationship to Adaptive Preconditioning}

The modulation factor $\phi_t$ can be interpreted as a form of token-level adaptive preconditioning on the policy gradient. Standard GRPO applies a uniform scale to all tokens within a trajectory (or turn). Our method replaces this with a per-token scale that is:
\begin{itemize}
    \item Larger on tokens where the privileged teacher agrees with the verifier direction \emph{and} the student is uncertain (high surprisal).
    \item Equal to $1$ on tokens where the teacher disagrees, or the student is already confident.
\end{itemize}

This is analogous to how natural gradient methods or Adam apply per-parameter adaptive learning rates, except that our formal guarantee is token-wise scalar-sign preservation rather than preservation of the aggregate parameter-gradient direction. In our case, the ``curvature proxy'' is the teacher--student divergence gated by student uncertainty, and the adaptation is at the token level rather than the parameter level.

The key difference from OPSD is that OPSD uses the teacher signal as the advantage itself (both direction and magnitude), whereas our method uses it only as a magnitude modulator on top of verified reward. Thus, OPSD's token-level optimization target is teacher-defined, while CrEST keeps each token's scalar update sign anchored to the verifier; this distinction is not a guarantee about the aggregate optimum or a global performance ceiling.

\section{Complete Algorithm}
\label{app:algorithm}

\Cref{alg:crest} provides the complete pseudocode for a single \ours training step, consolidating all equations from \cref{sec:method} into an executable procedure.

\begin{algorithm*}[ht]
    \caption{\ours Training Step}
    \label{alg:crest}
    \begin{algorithmic}[1]
        \REQUIRE Prompt batch $\mathcal{D}$, policy $\pi_\theta$, group size $G$, hyperparams $\lambda, \tau, \epsilon, \rho$
        \FOR{each prompt $x \in \mathcal{D}$}
        \STATE Sample $G$ rollouts $\{y^{(i)}\}_{i=1}^G \sim \pi_\theta(\cdot|x)$
        \STATE Obtain per-turn verified rewards $\{R_k^{(i)}\}$
        \STATE \textbf{// Inter-turn: turn-segmented advantage}
        \FOR{each turn $k$}
        \STATE $A_k^{(i)} \gets (R_k^{(i)} - \mu_k) / (\sigma_k + \epsilon)$ \hfill \textit{(\cref{eq:turn_advantage})}
        \ENDFOR
        \STATE \textbf{// Intra-turn: entropy-gated self-teacher}
        \STATE Compute privileged teacher logprobs $\log \pi_T(y_t \mid h_t^T)$
        \FOR{each token $t$ in rollout $i$}
        \STATE $\Delta_t \gets (\log \pi_T - \log \pi_\theta) / \tau$ \hfill \textit{(\cref{eq:delta})}
        \STATE $w_t \gets \mathrm{clip}(\exp(\mathrm{sign}(A_{[t]}^{\mathrm{turn}}) \cdot \Delta_t),\; 1{-}\epsilon,\; 1{+}\epsilon)$ \hfill \textit{(\cref{eq:weight})}
        \STATE $g_t^{\mathrm{dir}} \gets \mathbf{1}[\mathrm{sign}(A_{[t]}^{\mathrm{turn}}) \cdot \Delta_t > 0]$ \hfill \textit{(\cref{eq:direction_gate})}
        \STATE $g_t^{\mathrm{ent}} \gets \sigma((u_t - \mathbb{E}[u]) / (\mathrm{std}(u) + \epsilon))$ \hfill \textit{(\cref{eq:entropy_gate})}
        \STATE $\lambda_t^{\mathrm{eff}} \gets \mathrm{clip}(\lambda \cdot g_t^{\mathrm{dir}} \cdot (1 + \rho(2g_t^{\mathrm{ent}} - 1)),\; 0,\; \lambda)$ \hfill \textit{(\cref{eq:lambda_eff})}
        \STATE $\mathcal{A}_t \gets A_{[t]}^{\mathrm{turn}} \cdot (1 + \lambda_t^{\mathrm{eff}} \cdot (w_t - 1))$
        \ENDFOR
        \ENDFOR
        \STATE Update $\theta$ via policy gradient with per-token advantages $\{\mathcal{A}_t\}$
    \end{algorithmic}
\end{algorithm*}

\section{Experimental Details}
\label{app:implementation}

\subsection{Training Configuration}

All experiments use the following shared configuration:

\begin{itemize}[nosep, leftmargin=12pt]
    \item \textbf{Hardware}: Single-node $8{\times}$ H200 GPUs. For methods requiring a teacher (OPSD, \ours{}), 4 GPUs serve the teacher via SGLang and 4 GPUs handle student training and rollout generation, with tensor parallelism $= 2$ and sequence parallelism enabled. For OPD with Qwen3-4B-Instruct, the external teacher (Qwen3-235B-A22B-Instruct) requires an additional 8 GPUs for deployment, resulting in a 16-GPU setup (8 for teacher, 8 for student).
    \item \textbf{Rollout}: On-policy sampling with group size $G{=}16$, temperature $1.0$, max generation length $10{,}000$ tokens per rollout.
    \item \textbf{Batch size}: $32$ prompts per step $\times$ $16$ samples per prompt $= 512$ rollouts per step.
    \item \textbf{Optimizer}: Adam with $\beta_1{=}0.9$, $\beta_2{=}0.999$, weight decay $0.01$, gradient clipping $1.0$.
    \item \textbf{Learning rate}: $1{\times}10^{-6}$, constant schedule.
    \item \textbf{Training steps}: The main Qwen3-4B BFCL V3 runs use method-specific schedules: $145$ steps for GRPO, $150$ for MT-GRPO, $160$ for EnvTuning, $100$ for OPD, $89$ for OPSD, and $160$ for \ours. We do not assume a single universal step count across methods.
    \item \textbf{PPO clipping}: $\epsilon_{\mathrm{low}}{=}0.2$, $\epsilon_{\mathrm{high}}{=}0.28$.
    \item \textbf{KL penalty}: $\beta_{\mathrm{KL}}{=}0.0$ (no explicit KL regularization; clipping provides implicit constraint).
\end{itemize}

For \ours{}-specific hyperparameters: $\lambda{=}0.3$ (modulation strength), $\epsilon{=}0.28$ (clip bound), $\tau{=}2.0$ (temperature for $\Delta_t$), and $\rho{=}0.5$ (entropy gate range). Only $\lambda$ is tuned; the remaining values are fixed design choices across all benchmarks and model scales.

Each method is trained with a single training seed. Evaluation uses three decodes at temperature $10^{-6}$; these repeated decodes measure decoding consistency rather than training variance and are not independent training replicates.

The self-teacher is constructed by conditioning the same model on ground-truth tool-call results for preceding turns, providing privileged context without requiring an external teacher model or tokenizer alignment.

\subsection{OPSD Stabilization and Teacher Update Strategies}
\label{app:opsd_stabilization}

\begin{figure*}[!t]
    \centering
    \begin{subfigure}[t]{0.48\textwidth}
        \centering
        \includegraphics[width=\linewidth]{figures/three_teachers.png}
        \caption{OPSD teacher update strategies.}
        \label{fig:three_teachers}
    \end{subfigure}
    \hfill
    \begin{subfigure}[t]{0.48\textwidth}
        \centering
        \includegraphics[width=\linewidth]{figures/ablation_lambda.png}
        \caption{Modulation strength $\lambda$ sensitivity.}
        \label{fig:ablation_lambda}
    \end{subfigure}
    \caption{Analysis of OPSD teacher update strategies and modulation strength on BFCL V3 (Qwen3-4B). (a)~Comparison of OPSD teacher update strategies: the online teacher collapses catastrophically within ${\sim}20$ steps, while fixed and EMA teachers are both stable, with the fixed teacher achieving marginally higher and more consistent accuracy; we adopt the fixed teacher with per-token KL clipping (threshold $0.2$) for the OPSD baseline in all experiments. (b)~Effect of the modulation strength $\lambda$ on training accuracy: $\lambda{=}0.3$ provides the best trade-off; higher values over-concentrate gradients.}
\end{figure*}

Directly applying OPSD~\citep{zhao2026self} to multi-turn tool-use trajectories results in training collapse within ${\sim}20$ steps, consistent with findings reported by the original authors. To establish a fair OPSD baseline, we investigated three teacher update strategies (\cref{fig:three_teachers}):

\begin{itemize}[nosep, leftmargin=12pt]
    \item \textbf{Online teacher} (updated with policy): Catastrophic collapse by step 20---training accuracy drops from ${\sim}0.45$ to near zero, caused by pivot-token concentration shifting direction each update.
    \item \textbf{Fixed teacher} (frozen at initialization): The most stable strategy, maintaining training accuracy around $0.44$--$0.48$ throughout. However, the static teacher distribution defines the token-level optimization target and produces an observed plateau.
    \item \textbf{EMA teacher} (exponential moving average, $\alpha{=}0.995$): Comparable stability to the fixed teacher but no improvement---the teacher converges to a lagged copy of the student, erasing the privileged information advantage.
\end{itemize}

Based on these results, our OPSD baseline adopts the fixed teacher with per-token KL clipping (clip threshold $0.2$) as recommended by \citet{zhao2026self}. This prevents collapse but leaves the student at the observed plateau associated with a teacher-defined optimization target shown in \cref{fig:three_teachers}. This empirical observation directly motivates \ours{}'s design: rather than using the teacher signal as the optimization target, we use it solely as a gradient magnitude modulator anchored to the verified token sign.

\subsection{Modulation Strength Sensitivity}
\label{app:lambda_sensitivity}

\Cref{fig:ablation_lambda} varies the modulation strength $\lambda$, which controls how aggressively the entropy gate selectively amplifies token updates. At $\lambda{=}0$ (inter-turn-only \ours{}, $\phi_t{=}1$), final accuracy is ${\sim}0.63$. At $\lambda{=}0.3$, entropy-gated modulation raises final accuracy to ${\sim}0.69$ (+6\%) by emphasizing high-uncertainty content tokens and reducing amplification on low-entropy format tokens. At $\lambda{=}0.5$, over-concentration causes early gains but stagnation below $\lambda{=}0.1$ by step 80, ending at ${\sim}0.61$. The sweet spot at $\lambda{=}0.3$ confirms that \emph{calibrated} selective amplification, not maximal concentration, best resolves intra-turn credit ambiguity.

\end{document}